\newtheorem{thm}{\bf Theorem}
\theoremstyle{definition}
\newtheorem{defn}{Definition}[section]
\title{Automatic Hyper-Parameter Optimization Based on Mapping Discovery from Data to Hyper-Parameters}
\author{
Bozhou Chen$^1$
\and
Kaixin Zhang$^1$\and
Longshen Ou$^1$\and
Chenmin Ba$^1$\and
Hongzhi Wang$^1$\And
Chunnan Wang$^1$
\affiliations
$^1$Harbin Institute of Technology\\
\emails
bozhouchen@hit.edu.cn,
\{1170300216,1170300321,1170300524\}@stu.hit.edu.cn,
\{wangzh,WangChunnan\}@hit.edu.cn
}
\begin{document}
\maketitle
\begin{abstract}
Machine learning algorithms have made remarkable achievements in the field of artificial intelligence. However, most machine learning algorithms are sensitive to the hyper-parameters. Manually optimizing the hyper-parameters is a common method of hyper-parameter tuning. However, it is costly and empirically dependent. Automatic hyper-parameter optimization (autoHPO) is favored due to its effectiveness. However, current autoHPO methods are usually only effective for a certain type of problems, and the time cost is high. In this paper, we propose an efficient automatic parameter optimization approach, which is based on the mapping from data to the corresponding hyper-parameters. To describe such mapping, we propose a sophisticated network structure. To obtain such mapping, we develop effective network constrution algorithms. We also design strategy to optimize the result futher during the application of the mapping. Extensive experimental results demonstrate that the proposed approaches outperform the state-of-the-art apporaches significantly.

%The system consists of three parts and has three strong points: (1) Any algorithm with hyper-parameters can be optimized with the system. (2) The prediction speed 	of our system is faster than that of Bayesian optimization method. (3) We are the first to take all datasets suitable for the same algorithm to be adjusted into consideration to achieve parameter prediction. We test our system with XGBOOST and CNN. In comparison to typical methods, the system performs better in the aspect of time efficiency and accuracy.
\end{abstract}
\section{Introduction}
Automatic machine learning (autoML) have gained wide attention and applications in both industry and academia. Automatic hyper-parameters optimization is one of the most critical parts. The effectiveness of many machine learning algorithms is extremely sensitive to parameters\cite{zkx_ref_1}. Without a good set of hyper-parameters, the machine task cannot be solved well even with optimal model.

Among the hyper-parameter optimization approaches, data-driven methods draw attentions since they could achieve effective prediction of hyper-parameters based on historical experience implicit in the data.

However, data-driven automatic hyper-parameter optimization faces three severe challenges. Firstly, exisitng systems may involve thousands of machine learning tasks with many hyper-parameters\cite{zkx_ref_2}. Recalculating hyper-parameters for each task may cause large time cost. Thus, the optimization process should be efficient. Secondly, the hyper-parameter optimization algorithm should have good transferability. The reason is that the optimal hyper-parameters are always different for different datasets, without transferability, the hyper-parameter optimization algorithm has to be run many times even for the same machine learning algorithm. Thirdly, a hyper-parameter optimization algorithm should be able to handle many parameters, since some complex machine learning algorithms have thousands of hyper-parameters\cite{zkx_ref_3}, which are required to be optimized to ensure the effectiveness.

Even though some hyper-parameter optimization algorithms have been proposed. They could not solve all these problem. 	
%Bayesian Optimization used the Bayesian model to optimize hyper-parameters~\cite{FrazierA}. However, such algorithm usually has an upper limit on the number of tunable parameters.
MI-SMBO\cite{feurer2015initializing} optimizes hyper-parameters based on historical data's meta-feature. It accelerates the optimization process and significantly improves the algorithm’s performance. However, in this approach, the meta-features are selected manually, which limited its transferability. Also, as a kind of SMBO module, since MI-SMBO needs to run the machine learning algorithm iteratively for many times, the low efficiency of MI-SMBO may be caused by inefficient machine learning algorithm. \cite{Rijn2018hyper-parametersIA} selects the hyper-parameters with the most significant influence to the performance, and predicts priors hyper-parameters based on the best $n$ group hyper-parameters in the historical datasets. This method also needs to iterate the machine learning algorithm and could hardly optimize complex algorithms in limited time.
%Bayesian optimization has been applied to optimize high-dimensional parameters\cite{Ar}\cite{5c2348ceda562935fc1d576d}. However, they still cannot achieve high efficiency. An automatic parameter selection algorithm for the Normalized Kernel Function to Support Vector Machines~\cite{DBLP:journals/jise/LiHLLKT12} and \cite{article} Sai Li tried to solve kernel function's parameters efficiently. However, their model is only suitable for some specific machine learning tasks. Initializing Bayesian Hyper-parameter Optimization via Meta-Learning\cite{feurer2015initializing} considers extracting the meta-knowledge of datasets to compute the hyper-parameters, but they \textcolor{blue}{just applying meta-knowledge to the determination of parametric search directions}\footnote{results}

%\footnote{discussion the reason why they could not handle the problem, and connect the follows}
This motivate us to solve these problems. Intuitively, the optimal parameters are determined by two factors, i.e., the machine learning algorithm and the data. Therefore, under the same algorithm, the parameters are completely determined by the data. Thus, we attempt to investigate the relationship between parameters and data. Considering that each dataset corresponds to at least one set of optimal hyper-parameters, we believe that there is a mapping from data space to parameter space, and describe this mapping with a neural network. As a result, we use this mapping to achieve prediction of hyper-parameters directly.

Our contributions of this paper are summarized as follows.

\begin{list}{\labelitemi}{\leftmargin=1em}\itemsep 0pt \parskip 0pt
\item We consider the mapping from data to the optimal hyper-parameters and apply this mapping to the selection of the optimal hyper-parameters. On different tasks of an algorithm, the model has strong transferability, which greatly saves time overhead. For this reason, the model can achieve ultra-high-dimensional optimization of hyper-parameters.

\item With XGBOOST as an example, we design the neural network structure for the mapping as well as traing approaches, which could be applied to other machine learning tasks with slight modificaiton.
%In this article, we mainly discuss the time cost of prediction and the performance of prediction parameters.

\item Experimental results on real data demonstrate that the proposed approach significantly outperforms the state-of-art algorithms in both accuracy and efficiency.
\end{list}

In the remaining of this paper,  Section~\ref{sec:method} describes the proposed approach. Extensive experiments are conducted in Section~\ref{sec:exp}. We overview related work in Section~\ref{sec:related}. Section~\ref{sec:con} draws the conclusions.

%None of them consider the mapping mentioned earlier. We are the first.
\section{Method}

\label{sec:method}
The basic idea of our approach is to build a mapping from datasets to the optimal hyper-parameters and use this mapping to take parameter determination according to a given dataset. Since mapping is the core concept of our approach, we define it at first and overview the algorithm in Section~\ref{sec:overview}, and then discuss the major components of our algorithm in Section~\ref{sec:building} and Section~\ref{sec:application}, respectively.

\subsection{Overview}
\label{sec:overview}

For a machine learning algorithm, the optimal hyper-parameters should be specific for the dataset $D$. From this aspect, optimal hyper-parameters generator for a algorithm could be considered as training a mapping from a dataset $D$ to an optimal parameter vector $P$, which is defined as follows.

	\begin{defn}
			\textbf{Parameter Mapping}: For a machine learning algorithm $ALGO$, the mapping from each training dataset $D$ of $ALGO$ to the corresponding optimal hyper-parameter vector $P$ is called a parameter mapping from $D$ to $P$ w.r.t $ALGO$, determined by $MAP_{ALGO}$
\label{def:mapping}
		\end{defn}
	Since the mapping catches complex features of the data and may be very complex, we attempt to use a neural network to represent this mapping, which is called a \emph{\underline{C}ore \underline{N}etwork} (CN). Thus, our algorithm is divided into two phase, CN construction and CN application as is shown in Figure~\ref{fig:system}. And they are described in Section~\ref{sec:building} and Section~\ref{sec:application}, respectively. Before them, we introduce the structure of CN at first.
				\begin{figure}
					\setlength{\abovecaptionskip}{0.cm}
					\setlength{\belowcaptionskip}{-0.cm}
					\centering
					\includegraphics[scale = 0.2]{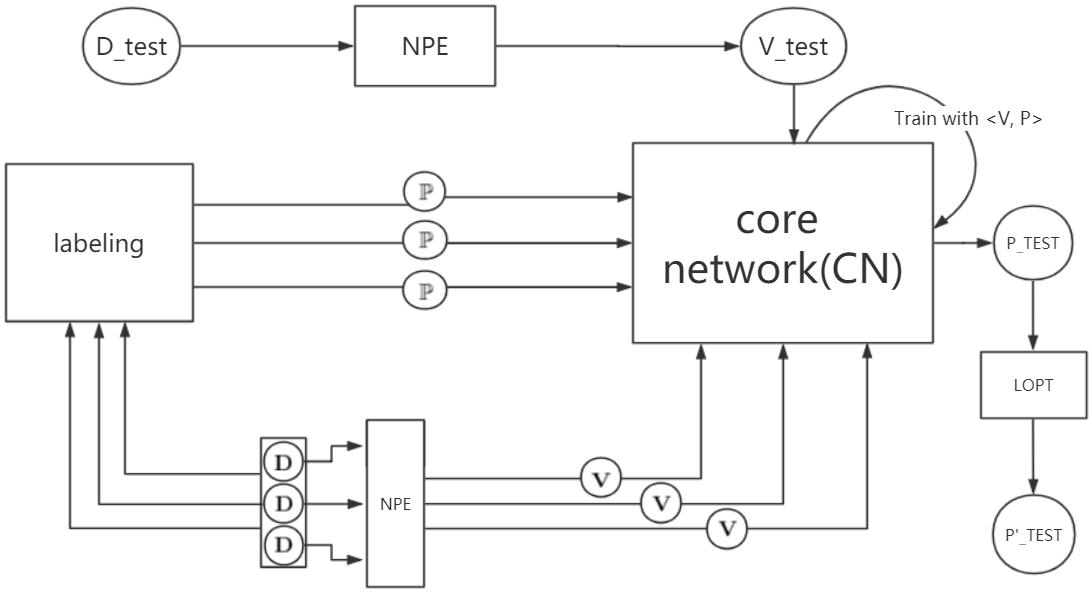}
\label{fig:system}
					\caption{The components of the proposed algorithm is shown in this figure. The CN construction process of is shown in the lower left, and the rest shows the CN application component for hyper-parameter prediction.}
					\label{fig:system}
				\end{figure}

%\footnote{why this figure does not have training component? and in the following discussions, the symbols in this figure should be explained}

	\subsection{CN Structure}
To build the mapping from the meta-feature to optimal hyper-parameters, we train a neural network called CN for each algorithm. The input of the CN is meta-feature of the dataset, and the output is the generated hyper-parameters. Clearly different machine learning task corresponds to different hyper-parameters and require different CN correspondingly. In this section, we
introduced the CN for XGBOOST\cite{zkx_ref_4}. The CNs for other tasks could be constructed in the similar way.

%For the convenience of discussion, we will introduce one of . \textcolor{blue}{add(For different algorithms, CN has different structures to fit different kinds of hyper-parameters.)}
			\begin{figure}[t]
					\setlength{\abovecaptionskip}{0.cm}
					\setlength{\belowcaptionskip}{-0.cm}
				\centering
				\includegraphics[scale = 0.15]{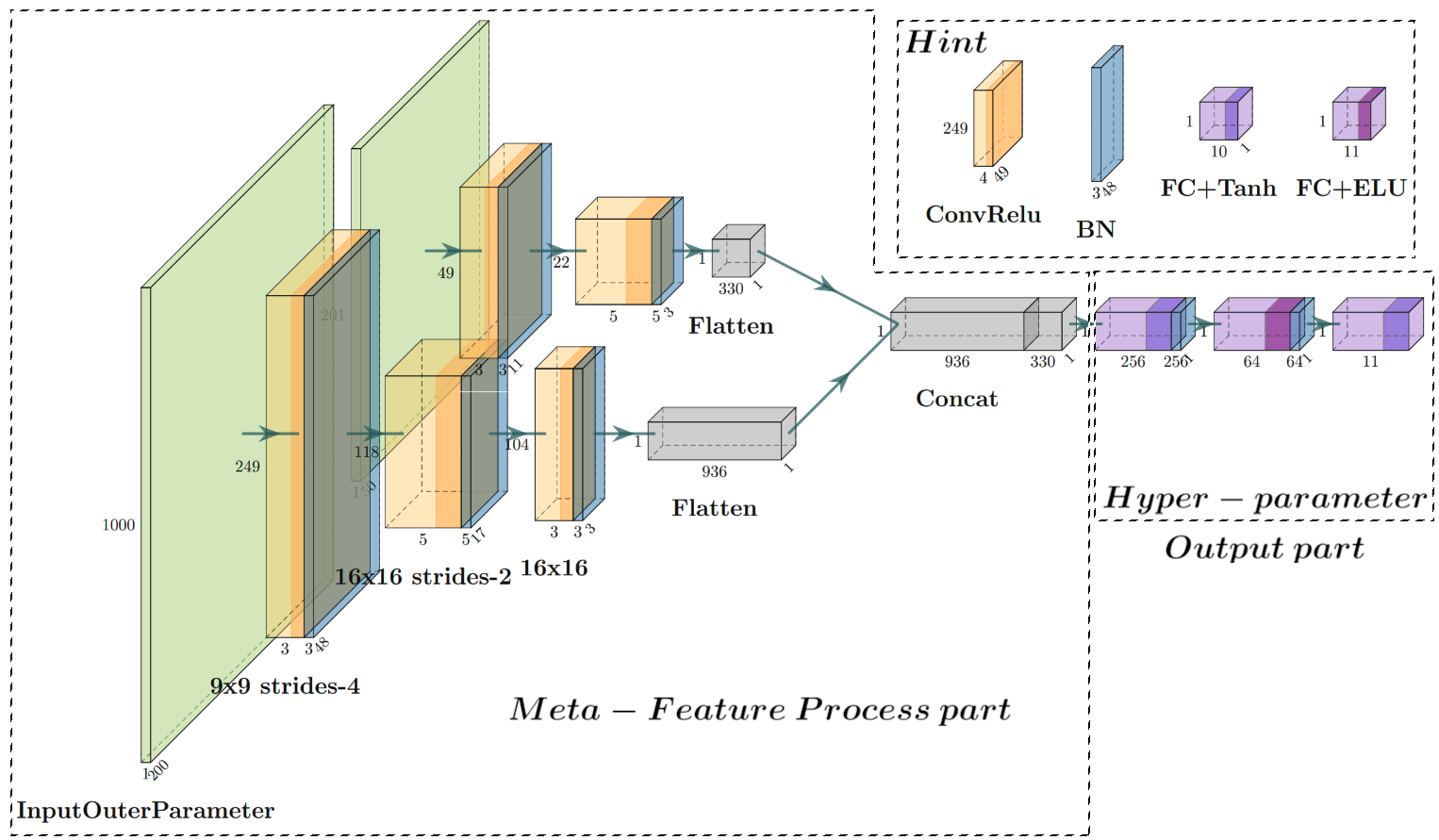}
				\caption{An example of core network.}
				\label{CoreNet}
			\end{figure}

The CN's structure is shown in Figure~\ref{CoreNet}. For such CN, we attempt to model the dataset as a neutral network and use its trainable parameters as the input. More specifically, the parameters refers to neutral network's weights and bias. The former is represented as a 2-dimension matrix, and the latter is a float.  The CN has two part, i.e., Meta-Feature Process part and Hyper-parameter Output part.

Meta-Feature Process part analyze the trainable parameter and reduce its size. In order to fully retain the structural information of the meta-features, we concatenate the biases of the same layer to weights, and then use these 2-dimension matrices as the CN input. Since these matrices are often large, to reduce the difficulty of CN's training, we use a convolution layer (ConvRelu component in Figure~\ref{CoreNet}) to reduce the size of the inputs.  According to the output format, the output of each convolution layer is flattened (Flatten component in Figure~\ref{CoreNet}) and concatenated (Concat component in Figure~\ref{CoreNet}).

Hyper-parameter Output part combines the result of Meta-Feature Process part and predict the hyper-parameters. In this part, the fully connected layer (FC+Tanh and FC+ELU components in Figure~\ref{CoreNet}) uses active functions such as Tanh and ELU\cite{inproceedings} to make the range of output within that of the hyper-parameter. The activate function's selection depends on the range and distribution of the hyper-parameter\cite{akak}.

 %in Form-NPE.)}\textcolor{blue}{remove(Due to the huge size of the weights, we need to lower the size with convolution layers.)} Then concatenate the two 1D tensor, and use fully connection and LSTM to process the tensor.\footnote{the reason is unclear}

To construct CN for a machine learning algorithm $A$, we first select suitable neural network structure based on the data type of the input of $A$. For example, for image classification, the neural network could be CNN\cite{6795724}. While for NLP, the structure could be LSTM\cite{LSTM}.
After this, the inputs of CN, i.e., the trainable parameters of the neural network, are determined by the dimensions and data types of trainable parameters. Next, to construct Meta-Feature Process part, we use convolution layers similar as the CN described above to reduce the size of the input. Finally, we flatten the result of the inputs, and concatenate them into a 1-dimension vector if the results have multiple branches. Hyper-Parameter Output part is constructed by several fully-connected layers, and we need to adjust the last layer's output to fit the number of hyper-parameters. We can also use Batch-Normalization\cite{DBLP:journals/corr/IoffeS15} to increase the effectiveness of learning after the convolution layers if necessary.

	\subsection{CN construction}
\label{sec:building}

To construct CN, we have two major jobs. The first is to prepare suitable data for the CN training, and the second is to train the CN. We introduce them in this section, respectively.

\subsubsection{Data Preparation}
The preparation of data has two goals, i.e., sufficiency and task-fitting. To achieve the former, we develop sampling technique to generate sufficient training data from original datasets which are suitable for this problem and contain sufficient data. As for the latter, we propose encoding approach to extract the meta-features of the generated dataset, as is the input of CN. Additionally, we need to label the generated dataset by generating the corresponding optimal hyper-parameters.

%		In this stage, we should complete two tasks. The first task is to expand the amount of datasets, and we achieve it by sampling the datasets. The second task is to preprocess those datasets. Further, the second task consists of dataset's encoding and labeling. Encoding means extracting the meta-features of the dataset. Labeling means finding the optimal hyper-parameters corresponding to the dataset.\par
%		In order to generate more training data, we take sampling. In order to unify CN's input format, we take encoding. Because CN's training is supervised learning, we take labeling.

 \underline{Sampling} To obtain sufficient amount of the training data, we attempt to sample them from the a large dataset, which could be the union of training datasets. Note that to increase the generalization ability of the CN, the training datasets should be diverse. Due to a large amount of standard training dataset published online, it is easy to obtain such dataset. For example, for XGBOOST, we could easily obtain 98 datasets for classification from UCI datasets \footnote{\url{https://archive.ics.uci.edu/ml/datasets.php}}.

Clearly, to ensure the generalization ability of the trained CN, the sampled training data should be dissimilar. We measure the similarity of two datasets $S_i$ and $S_j$ with Jaccard similarity, i.e., $sim(S_i,S_j)$=$\frac{|S_i\cap S_j|}{|S_i\cup S_j|}$. If $sim(S_i,S_j)$$>\delta$, where $\delta$ is a threshold and $0 < \delta < 1$, and a small $\delta$ means a strong constraint for independency, then $S_i$ and $S_j$ are \emph{similar}.
Suppose we perform $m$ samplings to obtain $\mathbb{S}$=\{$S_1$, $\cdots$ $S_{k'}$\}. If for any $S_i$ and $S_j$ in $\mathbb{S}$, $sim(S_i,S_j)$$<\delta$, $\mathbb{S}$ is \emph{independent} w.r.t $\delta$. Clearly, the independence of $\mathbb{S}$ ensures the dissimilarity among the sampled training set. Fortunately, according to Theorem~\ref{thm:sampling}, $\mathbb{S}$ obtained by randomly sampling is independent.

\begin{thm}
\label{thm:sampling}
For a dataset $D$, the minimum size of sample set $k$ and $S<|D|$, there exists $m$ and a reasonable $\delta$ such that a sample set $\mathbb{S}$=\{$S_1$, $\cdots$, $S_{k'}$\} randomly sampled from $D$ with $|S_i|=S$ for any $0<i\leq k'$ is independent w.r.t $\delta$ and m. Mean while $E[k'] \geq k$.
			\end{thm}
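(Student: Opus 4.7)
The plan is to show that random sampling without replacement produces many pairwise-dissimilar subsets in expectation, by controlling the tail of the pairwise Jaccard similarity and then using a simple greedy filter to extract an independent subfamily.

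First, I would fix $n=|D|$ and observe that if $S_i,S_j$ are drawn uniformly and independently without replacement from $D$, then $|S_i\cap S_j|$ follows a hypergeometric distribution with mean $\mu = S^2/n$. Since $sim(S_i,S_j)=|S_i\cap S_j|/(2S-|S_i\cap S_j|)$ is strictly increasing in $|S_i\cap S_j|$, the event $\{sim(S_i,S_j)>\delta\}$ is exactly $\{|S_i\cap S_j|>2S\delta/(1+\delta)\}$. Choose $\delta$ strictly larger than the ``typical'' value $S/(2n-S)$, i.e., so that the threshold $t(\delta)=2S\delta/(1+\delta)$ exceeds the hypergeometric mean $\mu$. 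This is the sense in which $\delta$ is ``reasonable'': it must lie above the self-overlap baseline but can be any fixed constant below $1$ when $S\ll n$.

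Next, I would apply a Chernoff/Hoeffding tail bound for the hypergeometric distribution to get $q := \Pr[sim(S_i,S_j)>\delta]\le \exp(-c\,S\,(t(\delta)/S - S/n)^2)$ for an absolute constant $c>0$. The point is that $q$ shrinks to $0$ as $S/n\to 0$ for fixed $\delta$, so $q$ can be driven below any desired threshold. Now draw $m$ independent samples $T_1,\dots,T_m$ of size $S$, and run a greedy filter: keep $T_i$ iff $sim(T_i,T_j)<\delta$ for every previously kept $T_j$. The resulting family $\mathbb S$ is independent with respect to $\delta$ by construction, and if $X=\sum_{i<j}\mathbf 1\{sim(T_i,T_j)>\delta\}$ is the number of ``bad'' pairs, then $k'=|\mathbb S|\ge m-X$ (each bad pair eliminates at most one sample from the greedy output). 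Taking expectations gives $E[k']\ge m-\binom{m}{2}q$.

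Finally, for the prescribed $k$, I would choose $m$ and then $\delta$ (equivalently, the required smallness of $q$) to make $m-\binom{m}{2}q\ge k$; e.g., $m=2k$ combined with $q\le 1/(2k)$ suffices, and the latter is achievable by the tail bound above whenever $S$ is small enough relative to $n$ (which is part of the hypothesis $S<|D|$ together with choosing a reasonable $\delta$). The main obstacle I expect is not any single inequality but the bookkeeping: the statement quantifies over $m$ and $\delta$ existentially while fixing $k$ and $S$, so I must be careful to present the dependency order correctly (first pick $\delta$ to make $q$ small, then pick $m$ linearly in $k$) and to justify that the Chernoff-type bound on the hypergeometric actually applies in the regime $t(\delta)>\mu$. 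Everything else reduces to the linearity-of-expectation argument above.
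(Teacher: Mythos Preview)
Your argument is correct but differs from the paper's in two places. First, the filtering rule: the paper keeps exactly those samples that are dissimilar from \emph{all} of the other $m-1$ draws, so with $p_0=1-q$ the count $n$ of survivors satisfies $E[n]=m\,p_0^{\,m-1}$ by linearity of expectation; it then maximizes this expression over $m$ by differentiation and checks the maximum exceeds $k$. You instead run a greedy pass and lower-bound the number of survivors by $m$ minus the number of bad pairs, obtaining $E[k']\ge m-\binom{m}{2}q$. Both bounds are maximized near $m\approx 1/q$, yielding roughly $1/(eq)$ for the paper and $1/(2q)$ for you, so your route is marginally sharper and also makes the deletion argument explicit. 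Second, the paper simply asserts that $p_0$ is close to $1$ (illustrated numerically in the MNIST example) without deriving it, whereas you supply the hypergeometric/Chernoff tail to quantify $q$; this is an addition rather than a deviation.

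One small wording issue to tighten: you justify $q\le 1/(2k)$ by saying ``whenever $S$ is small enough relative to $n$,'' but $S$ and $n=|D|$ are fixed in the statement. The quantity you are actually free to move is $\delta$: for fixed $S<n$, as $\delta\uparrow 1$ the threshold $t(\delta)\to S$ and $q\to \Pr[S_i=S_j]=1/\binom{n}{S}$, so $q$ can be made as small as needed. Phrase the dependency as ``choose $\delta$ large enough (but below $1$) to force $q\le 1/(2k)$, then take $m=2k$,'' and the existential quantifiers line up with the theorem as stated.
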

			\begin{proof}{sketch}
we only need to ensure the $expectation\ of\ k' \geq k$ to ensure that the sample set is unbiased. It is easy to satisfy by taking out all the subsets of $m$ samples  independent to other samples (suppose the number is $ n $). As long as the instance with $ k '= n $ and $ E [n] \ geq k $ has a solution, we can get the result such that $ E [k'] \ geq k $. The proof of $ E [n] \ geq k $ is straightforward, since $ E [n] $ = $ m \ cdot p_0 ^ {(m-1)} $, which has a maximum value. We only need to prove that the maximum value of the function is greater than $ k $, and this matter can be solved by a differentiation directly.
			\end{proof}
				Now we take MNIST dataset as an example. Suppose $N = 60000$, $S = 1000$, ${\delta}_0 = 0.2$.
				When $N$ and $S$ take the above values, $p_0$ is very close to 1, and have $p_0 > 0.99999$.
				If $p_0 = 0.99999$, $E(n) \geq 1000$, then $1800 \leq m \leq 2000$. So sampling at least 1800 times can ensure that at least 1000 subsets are independent to each other. So it's no problem to get enough independent subsets.
				\begin{figure}
					\setlength{\abovecaptionskip}{0.cm}
					\setlength{\belowcaptionskip}{-0.cm}
					\centering
					\includegraphics[scale = 0.3]{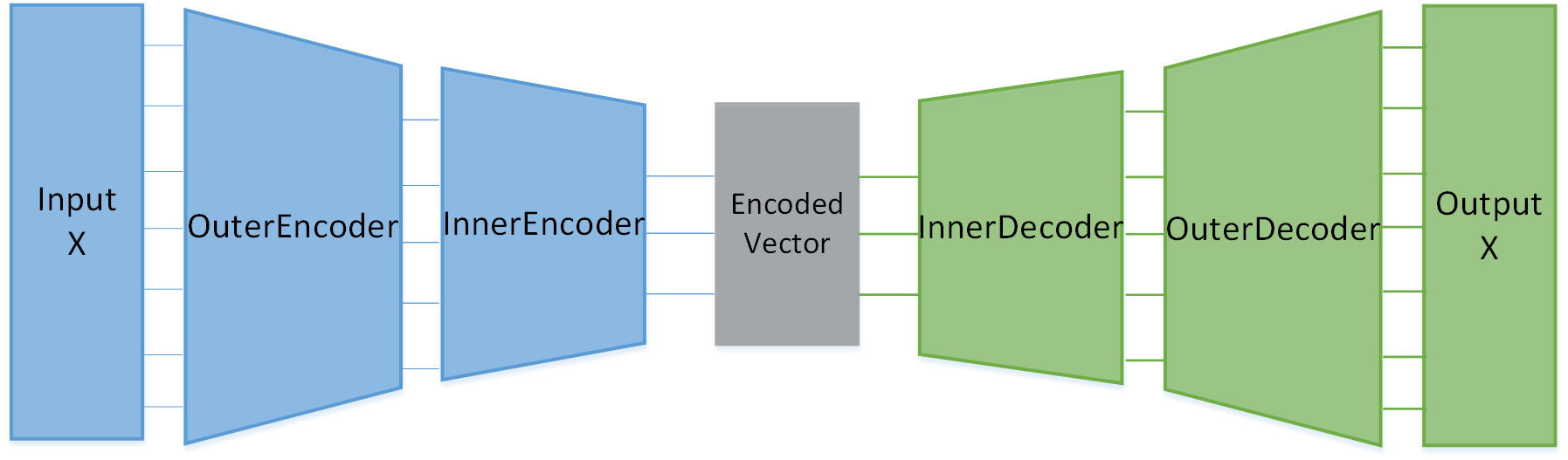}
					\caption{Normal-NPE}
					\label{encode1}
				\end{figure}
				\begin{figure}
					\setlength{\abovecaptionskip}{0.cm}
					\setlength{\belowcaptionskip}{-0.cm}
					\centering
					\includegraphics[scale = 0.2]{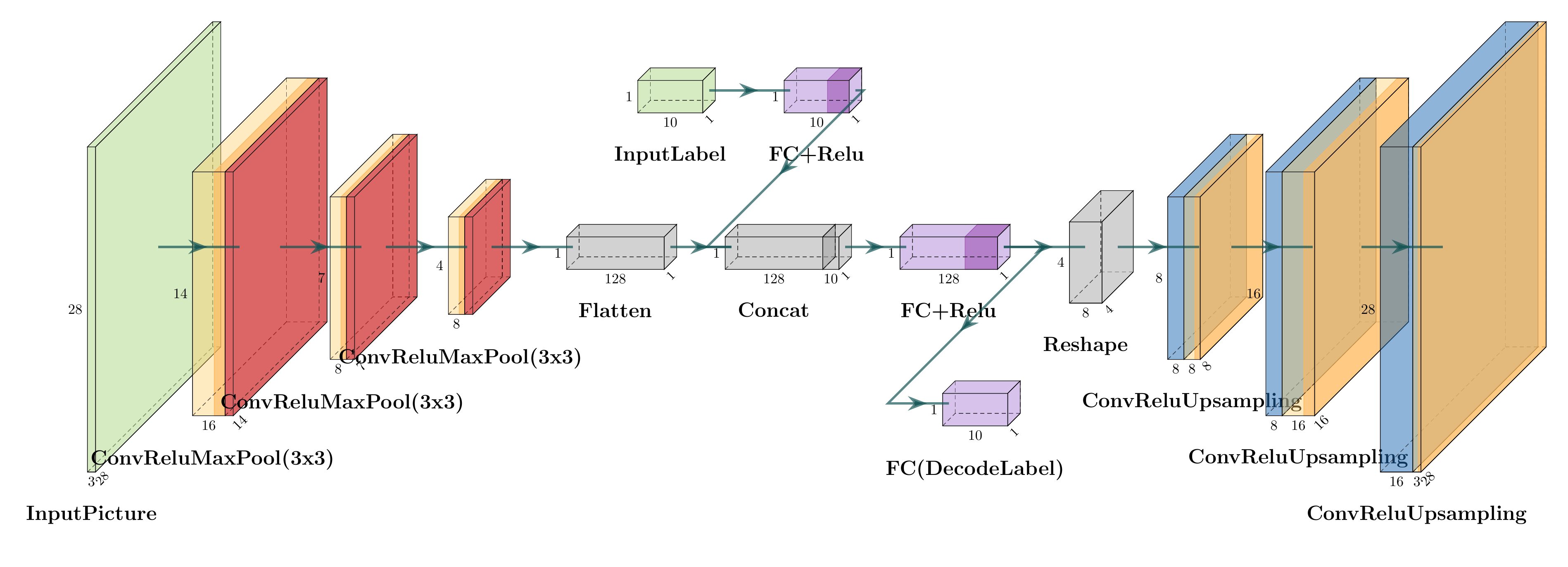}
					\caption{Image-NPE.}
					\label{encode2}
				\end{figure}

 \underline{Encoding} Even though the dataset may be various, the input of CN should be uniformed by encoding. Two issues must be addressed here. One is that the feature numbers of the dataset may be different. The other is that the number of samples in the dataset may be different.%\footnote{why not solve this problem by sampling or some step in the sample is not clarified}.

We solve the first one through zero-padding. That is, adding features to a dataset with small number of features. All these new features could be simply set to 0. Thus, the meaning of new dataset is consistent with that before it is processed. Experimental results have shown that zero-padding does not affect key indicators such as classification accuracy.

For the second issue, we design  \emph{\underline{N}etwork \underline{P}arameter \underline{E}mbedding}(NPE) approach, which uses an auto-encoder to encode the dataset, and returns the neural network parameters of the encoder as the feature of the dataset. There are two differences between NPE and traditional auto-encoders.
On the one hand, NPE encodes attributes and the label at the same time, because only when the attributes and label are jointly encoded, the result can represent the features of the original dataset. On the other hand, as discussed above, we use the parameters of corresponding neural network representing the dataset as the input of CN. Therefore, in the application phase, each dataset is encoded to parameters.

Since different dataset may be represented by different neural network with different parameters. We develop two types of NPEs, Normal-NPE and Image-NPE to fit to typical types of datasets for our CN.%\footnote{In this experiment, we artificially constructed NPE suitable for structured datasets and pictures. Possible data types are not just these two. Further,
In this paper, we focus on these two data types and will study NPEs for more data types in the future.

	Table-NPE is used to process table data, i.e., each sample is a one-dimensional vector. The input and output of the encoder are one-dimensional. Considering that the number of features of the data is not particularly large, it can use the fully connected layer as the main structure of the network. Here we use a stack autoencoder\cite{zkx_ref_9}, whose structure is shown in Figure~$\ref{encode1}$. It can effectively encode structured datasets.
		
	The essence of Image-NPE is a convolutional auto-encoder, which is designed to encode unstructured datasets such as images. Firstly, a convolutional self-encoder commonly used in image encoding problems is used. However, considering that the label of the dataset should be encoded at the same time as the picture, it can be ensured the encoding result can represent the original dataset. Therefore, at the output layer of the encoder, the encoding result of the picture is flattened (saves the picture structure information), and the label is jointly encoded by the fully connected layer, and then the fully linked layer is used to separate the label and the picture in the decoder part. Finally, the reshape layer is used to recover the picture using the previously saved picture structure information, which is decoded and output by the convolutional decoder. The structure of Image-NPE is shown in Figure~$\ref{encode2}$.

 \underline{labeling} The goal of labeling is to generate the optimal hyper-parameters as the label for the sampled dataset $S_i$ to form the input of CN with the encoded features of $S_i$. Our solution is to compute the hyper-parameters with existing approaches such as the work in \cite{Frazier2018ATO} and pick the best one according to the experiments.

%\textcolor{red}{The training of CN is a supervised learning process. At present, there are many effective hyper-parameter optimization algorithms. As long as time is enough, these algorithms can usually find the optimal hyper-parameters. In our work, we choose different labeling algorithms according to different algorithms to be adjusted.}\footnote{not discuss how to train it}

\subsubsection{CN Training}

Intuitively, CN can be trained with normal neural network training approaches such as the work in \cite{zkx_ref_10}. The major challenge in CN training is that the loss is too large and could not converge.

The first cause is that the label range of the training data is too wide to learn.
To solve this problem, we first zoom the labels with tanh, then adjust the activation function of neural network's output layer to tanh, so that the output's range can fit the labels' perfectly.

The second cause is the gradient explosion\cite{zkx_ref_12}. To handle this issue, the gradient clipping was performed, and the full link layer activation function is changed to tanh. %The original LSTM layer\footnote{where} was cancelled and a BN layer is added.
The third cause is that the prediction result of the CN for a label with a large value is small and has little change during training. This is due to the saturation of tanh. To solve this problem, the data with the larger value in the label is taken as log10 before tanh function computation.
After the application of these strategies, the CN's loss in the training gets small, and the validation sets can converge steadily.
%\footnote{not discuss this part}

Besides, in order to improve the generalization, dropout technology could be used.\cite{ckck}

\subsection{CN Application}
\label{sec:application}
After CN is trained, it can be applied to generate the optimal hyper-parameters. Before prediction, it is still necessary to encode the dataset to import the dataset into CN. This process is exactly the same as that in CN construction.

Note that although the CN prediction results may still contain some errors, which may cause a huge loss in performance. Therefore, we need to optimize the output of CN furthermore. Since in most of the cases, even with the errors, the results generated with CN are around the optimal results\cite{zkx_ref_15}, we attempt optimize the parameters within this local area, as is called local optimization.

 \underline{Local Optimization}
	Suppose $\boldsymbol{P}$ is the output of CN. We divide $\boldsymbol{P}$ into two subsets $\boldsymbol{P}_l$ and $\boldsymbol{P}_r$, and then divide these two subsets recursively until one subset contains just one or two parameters. If a subset has just one parameter, then the function MC(mountaining climbing method) is invoked. If a subset contains two parameters, then the function DMC(dual mountain climbing method) is invoked.

The whole process is shown in the Algorithm $\ref{lopt}$. LOPT$'s$ input is the output of CN, and its output is the optimized parameters. In this algorithm, we put all the parameters to optimize in a list $P$. Similar as Quick-sort, during recursion, we optimize the parameters in a range in $P$.

Line 1 and 2 initializes a segment tree and invoke FUNC method for local optimization, which is in Line 3-11. FUNC has three parameters. P is the parameter list. l and r is the range of parameters in $P$ to optimize in this function. In each loop in FUNC, the parameters in $P$ is optimized recursively in Line 8-11 until converge. In this algorithm, to acclerate the judgement of converge, we maintain the sum of the absolute update values of parameters in a segment tree\cite{zkx_ref_16}, such that for each loop, the sum is unnecessary to be recomputed, and the total time of converge checking is changed from $O(n)$ to $O(logn)$, where $n$ is the parameter number. Line 12-30 is the process of MC method, which is the mountain climbing process and updates the segment tree with the updated value. Line 31 to 34 is the process of DMC method, which optimize $P[l]$ and $P[r]$ iteratively.
			\begin{figure}[t]
					\setlength{\abovecaptionskip}{0.cm}
					\setlength{\belowcaptionskip}{-0.cm}
				\centering
				\includegraphics[scale = 0.3]{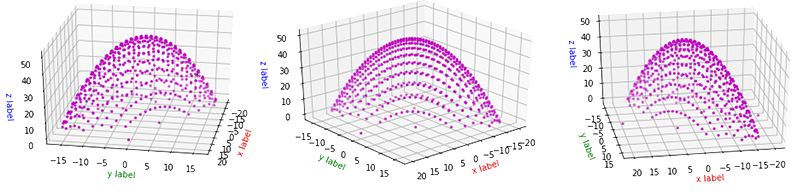}
				\caption{The distribution of the algorithm's accuracy when changing its two parameters. Test$\_$ENV is the algorithm to be adjusted.}
				\label{distr}
			\end{figure}
				\begin{algorithm}[h]
\scriptsize
				  \caption{LOPT}
				\label{lopt}
				  \centering
				  \begin{algorithmic}[1]
				    \Require $\boldsymbol{P}$
				    \Ensure $\boldsymbol{{P}'}$
					\State $seg\_tree \gets init()$
					\State $\boldsymbol{{P}'} \gets \Call {func}{\boldsymbol{P},1,n}$
					\Function{func}{$\boldsymbol{P},l,r$}
				      \If {$l = r$}
				      \State $\boldsymbol{P} \gets \Call {mc}{\boldsymbol{P},l}$
					\Return $\boldsymbol{P}$
				      \EndIf
				      \If {$l = r - 1$}
				      \State $\boldsymbol{P} \gets \Call {dmc}{\boldsymbol{P},l,r}$
					\Return $\boldsymbol{P}$
				      \EndIf
				    \While {$check\_over(l,r) = false$}
				      \State $mid \gets (l + r) / 2$
				      \State $\boldsymbol{P} \gets \Call {func}{\boldsymbol{P},l, mid}$
				      \State $\boldsymbol{P} \gets \Call {func}{\boldsymbol{P},mid + 1, r}$
				    \EndWhile
					\Return $\boldsymbol{P}$
					\EndFunction
		
					\Function{mc}{$\boldsymbol{P},x$}
					\State $stride \gets \epsilon$, $value\_x \gets \boldsymbol{P}[x]$, $value\_last\_x \gets inf$, $s\_value\_x$
				      \While {$|value\_x - value\_last\_x| > {\epsilon}'$}
					\State$a \gets value\_x - stride$
					\State$b \gets value\_x$
					\State$c \gets value\_x + stride$
					\State$value\_last\_x \gets value\_x$
					\State$\boldsymbol{P}[x] \gets a,accu\_a \gets accu(\boldsymbol{P})$
					\State$\boldsymbol{P}[x] \gets b,accu\_b \gets accu(\boldsymbol{P})$
					\State$\boldsymbol{P}[x] \gets c,accu\_c \gets accu(\boldsymbol{P})$
					\If {$accu\_a > accu\_b$}
					\State $\boldsymbol{P}[x] \gets a$
					\ElsIf{$accu\_c > accu\_b$}
					\State $\boldsymbol{P}[x] \gets c$
					\Else
					\State $stride \gets stride / 2$
					\State $value\_x \gets \boldsymbol{P}[x]$
					\EndIf
					\EndWhile
					\State $\Delta \gets |value\_x - s\_value\_x|$
					\State $\Call {update}{seg\_tree,x,\Delta}$
					\Return $\boldsymbol{P}$
					\EndFunction
		
					\Function{dmc}{$\boldsymbol{P},l,r$}
				    \While {$check\_over(l,r) = false$}
					\State $\boldsymbol{P} \gets \Call{mc}{\boldsymbol{P},l}$
					\State $\boldsymbol{P} \gets \Call{mc}{\boldsymbol{P},r}$
				    \EndWhile
					\Return $\boldsymbol{P}$
					\EndFunction
				  \end{algorithmic}
				\end{algorithm}

%\footnote{add the discussion of finding the ridge}

		Since the running time of the function MC has a constant upper bound, the time complexity of MC is $O(1)$. We show that the time complexity of Algorithm\ref{lopt} can reach $O(nlogn)$  in Theorem~\ref{the:thm2}.

% with the assumption that \textcolor{red}{if the angle between the two lines is less than $\theta_0$, we consider them as parallel}\footnote{looks strange and discuss why the assumption is reasonable}

			\begin{thm}\label{the:thm2}
			Algorithm $\ref{lopt}'s$ time complexity can reach $O(nlogn)$, where $n$ is the parameter number.
			\end{thm}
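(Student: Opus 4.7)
The plan is to charge the total running time of Algorithm~\ref{lopt} to two sources: the $O(\log n)$ segment-tree operations inside each $\textsc{mc}$ call (lines~32--33) and inside each \texttt{check\_over} test, and the mergesort-style recursion of $\textsc{func}$. By the hypothesis stated just before the theorem, the climbing loop of $\textsc{mc}$ runs in $O(1)$, so each $\textsc{mc}$ invocation costs $O(\log n)$ in total; likewise every convergence guard of the while loops in $\textsc{func}$ and $\textsc{dmc}$ issues a single range query into the segment tree and is also $O(\log n)$.

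Next I would set up the recurrence under the regime the theorem is asserting, namely that the outer while loops in $\textsc{func}$ and $\textsc{dmc}$ converge in $O(1)$ iterations (this is what the phrasing ``can reach'' refers to). Each call $\textsc{func}(P,l,r)$ on a range of length $r-l+1$ then splits into two recursive calls of roughly half the size plus $O(\log n)$ for its guard, yielding
\[
T(n) \;=\; 2\,T(n/2) \;+\; O(\log n), \qquad T(1)=T(2)=O(\log n).
\]
The recursion tree has $\Theta(n)$ leaves, each costing $O(\log n)$ because of the segment-tree update inside $\textsc{mc}$, for a total leaf cost of $O(n\log n)$. The internal-node costs form a geometric sum bounded by $O(n)$. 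Hence $T(n)=O(n\log n)$, exactly the claimed bound.

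The main obstacle is justifying the $O(1)$ bound on the outer while-loop iterations; without it, the recurrence gains an extra factor per level and the $O(n\log n)$ bound collapses. I would address this by stating a local-convergence hypothesis on the accuracy landscape around the CN output, which the paper motivates through the smooth distribution shown in Figure~\ref{distr}. Under such a hypothesis, each application of $\textsc{mc}$ causes the segment-tree-maintained sum of per-coordinate updates $\Delta$ to decrease by at least a constant factor, so that $\texttt{check\_over}$ triggers termination after a constant number of rounds at every internal node of the recursion. Once this qualitative convergence claim is pinned down, the remainder of the argument is just the routine unrolling of the recurrence above, and the constants can be absorbed into the asymptotics.
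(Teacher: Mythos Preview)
Your proposal follows essentially the same route as the paper: both set up a divide-and-conquer recurrence for \textsc{func} with base cost $O(\log n)$ coming from the segment-tree operations in \textsc{mc}, and both obtain $O(n\log n)$ under the assumption that the outer \texttt{while} loop executes only a constant number of times. The only notable difference is in how that constant-iteration assumption is argued: the paper parameterizes the recurrence as $T(n)=C\,T(n/2)$, $T(1)=1+\log n$, solves to $T(n)=n^{\log C}\log n$, and then justifies $C=2$ via a ``coordinate rotation'' analogy asserting that two recursive calls already reach the optimum, whereas you invoke a local-convergence hypothesis on the accuracy landscape around the CN output---both are heuristic, and neither constitutes a rigorous bound on the loop count.
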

\begin{proof}(sketch)
The recursive equations based for Algorithm 1 is $T (n) = C T (n / 2)$, $T (1) = 1 + log (n)$, where $ C $ is the number of recursive calls ($ C \ geq 2 $). Solving the equation, $ T (n) = n ^ {logC} log (n) $. When $ C = 2 $, the algorithm has the optimal efficiency of $ O (nlogn) $. 

Now we attempt to prove that $C=2$ could ensure the correctness of the algorithm. Inside FUNC, the parameters are split into two halves, and FUNC is called recursively. Transforming between these two parameter sets is similar to the coordinate rotation transform in the DMC algorithm. After the transformation, it is ensured the parameters adjusted for each iteration are optimized along the optimal path. This ensures that the DMC algorithm will converge to the optimal solution after two iterations. These two parameter subsets divided by the FUNC function are regarded as two parameters. Similar transformations are performed on these two parameters. FUNC recursively calls itself twice and exits the loop directly. It can guarantee that the final result is optimal. %That is the case of $ C = 2 $. So the algorithm can reach $ O (nlogn) $ in the best case.
\end{proof}

\section{Experiments}
\label{sec:exp}
	In this section, we study our approach experimentally with two typical machine learning algorithm, $XGBOOST$ and $CNN$. For the experiment, we collected a series of relevant data suitable for the two algorithms respectively.

\begin{figure}[t]
					\setlength{\abovecaptionskip}{0.cm}
					\setlength{\belowcaptionskip}{-0.cm}
		\begin{minipage}{0.32 \linewidth}
			\centering
			\includegraphics[width=0.95\linewidth]{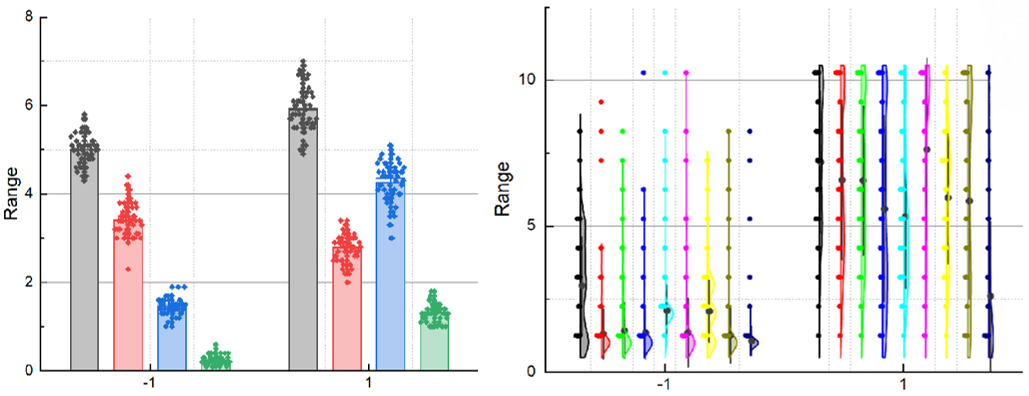}
			\caption{Distribution of features corresponding to different categories.}
			\label{ds_1}
		\end{minipage}
\hfill
	\begin{minipage}{0.32 \linewidth}
			\centering
			\includegraphics[width=0.95\linewidth]{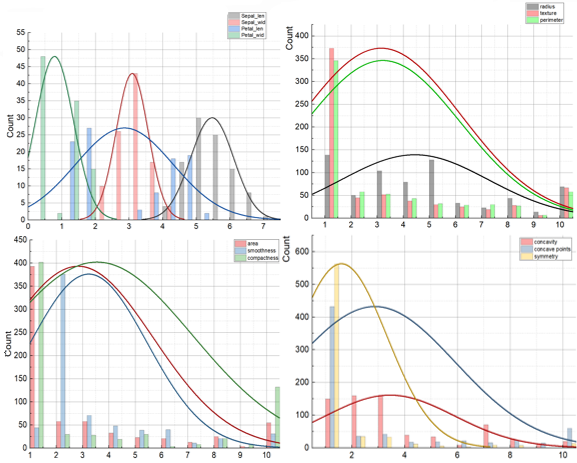}
			\caption{The number of values for each feature.}
			\label{ds_2}
		\end{minipage}
	%	\subsubsection*{CNN}
\hfill
			\begin{minipage}{0.32 \linewidth}
				\centering
				\includegraphics[width=0.95\linewidth]{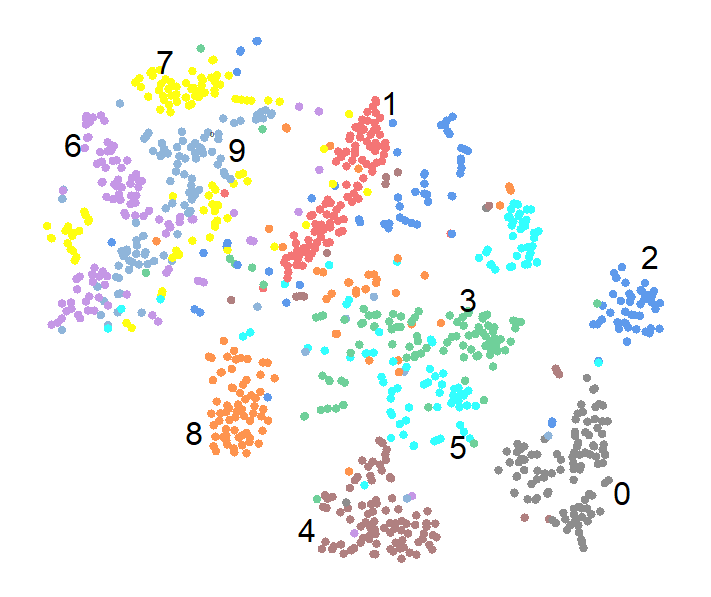}
				\caption{Visualization of MNIST dataset by T-SNE}
				\label{minist}
			\end{minipage}
\end{figure}

	\subsection{Experiment Data}
	\label{sec:ed}
		%\subsubsection*{XGBOOST}

 \underline{XGBOOST} 	We download 98 classification datasets from website\footnote{\url{https://archive.ics.uci.edu/ml/datasets.php}}. Then we sample on those datasets. Here we select two from the datasets for visucalization, as is shown in Figure \ref{ds_1} and Figure \ref{ds_2}.\par
	
 \underline{CNN} We choose MNIST and SVHN datasets, performing random sampling on them according to labels. For subset of MNIST, the size is set to 1000 (100 samples per class), 500 subsets in all. For subset of SVNH, the size is 5000 (500 samples per class), 500 subsets in all.
		In this part's labeling, we use the state-of-the-art derivative free optimization method SRACOS~\footnote{\url{https://github.com/eyounx/ZOOpt}}.

	\subsection{Experimental Settings}
		The software and hardware settings are shown in Table~\ref{tabs:envsetting}, and the data information is in Section~\ref{sec:ed}.
		\begin{table}[t]
					\setlength{\abovecaptionskip}{0.cm}
					\setlength{\belowcaptionskip}{-0.cm}
		\centering
\scriptsize
		\begin{tabular}{l|p{3cm}|p{3cm}}
		\hline
		  & XGBOOST & CNN \\
		\hline
		CPU       & AMD Ryzen 3600&Intel Xeon Platinum 8163 2.5GHz x4 (96 core)\\
		RAM       & 16x2G 3200MHz  & 251G\\
		GPU       & GTX 1060 6G (2000MHz)  & GTX 2080Ti x8\\
		\hline
		OS&Windows10 1903&Linux version 3.10.0-1062.9.1.e17.x86\_64\\
		python&3.7&3.6\\
		keras&2.3.1&2.3.1\\
		tensorflow&1.13.1&1.13.1\\
		tensorflow-gpu&1.13.1&1.13.1\\
		numpy&1.17.4&1.18.0\\
		pandas&0.23.4&0.25.3\\
		scikit-learn&0.22&0.22\\
		others&XGBoost 0.90, Bayesian-optimization 1.0.1&ZOOpt 0.18.2\\
		\hline
		\end{tabular}
		\caption{Software and hardware settings}
		\label{tabs:envsetting}
		\end{table}

	\subsection{Experimental Results}
	We design three groups experiments: blank control group(BCG) without pre-training, control group and experiment group. In the control group, we use Bayesian and ZOOpt to optimize those hyper-parameters. In the experiment group, we use CN and CN + LOPT (or CN) to optimize the algorithm. $\mathcal{P}$,$\boldsymbol{P}'$ and $\boldsymbol{L}$ are their outputs respectively, and they are all a set containing predicted hyper-parameters. Then the three sets of hyper-parameters will be tested on the algorithm(ENV). The overall process is shown in Figure~\ref{experiment} and detailed description is as follows.

		We divide the datasets $\mathcal{D}$ into $\mathcal{X}$ and $\mathbb{X}$ with size 9:1, and $\mathcal{X}$ is used to train CN. Then we continue to divide $\mathbb{X}$ by 9:1 into $\mathbb{X}_i\_{train}$ and $\mathbb{X}_i\_{test}$. We use $\mathbb{X}_i\_{train}$ to run on the three models and obtain the output, then we use  $\mathbb{X}_i\_{test}$ to test the output.

			\begin{figure}[t]
					\setlength{\abovecaptionskip}{0.cm}
					\setlength{\belowcaptionskip}{-0.cm}
				\centering
				\includegraphics[scale = 0.2]{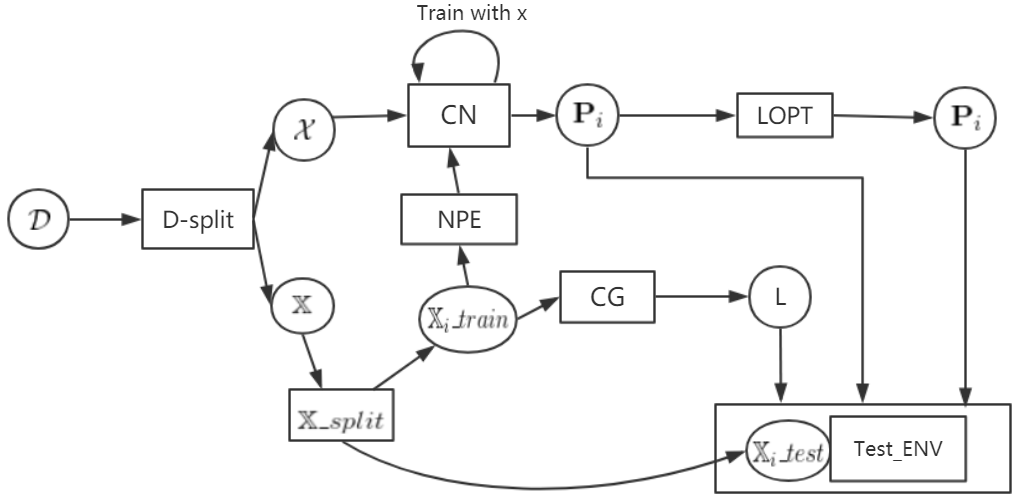}
				\caption{The overall process of our experiment. CN denotes control group. Test\_ENV denotes the algorithm to be adjusted.}
				\label{experiment}
			\end{figure}

 	\underline{XGBOOST}
			We verified the effectiveness of our model (CN, CN + LOPT) on 280 classified datasets, and used the Bayesian optimization algorithm (BO) as a control group, while setting up a blank control group (BCG). The optimal output by the four models are tested, and the accuracy is shown in Figure~\ref{xg_acc}. The horizontal axis represents different test files, and the vertical axis represents accuracy.

			To analyze the pros and cons of each model more clearly, we extract the median, mean, standard deviation, maximum, and quartile of the accuracy corresponding to each of the 280 runs of the tuning algorithm. Statistics are shown in the Table~\ref{tabs:xg}, and visualized in the Figure~\ref{xg_acc_box}. First, according to the performance of BCG, we observe that our dataset has a strong detection ability, which further illustrates that the CN and CN + LOPT models are effective. Second, from the comparison results of CN and CN + LOPT with BO, our model outperforms BO on various indicators. This shows hat our model has strong generalization and migration capabilities.

			To get a deeper understanding of the results on 280 files. For each file, we find the increment of CN + LOPT accuracy relative to BO accuracy. We count the number of files in each incremental interval and draw a pie chart as shown in Figure~\ref{xg_bing}. It can be seen from the results that our model is better than the control group BO on 3/4 of the test data.

			Additionally, we compare the time overhead of the three algorithms (CN, CN + LOPT, BO) for predicting the hyper-parameters of 280 test sets, as shown in Figure~\ref{xg_time}. The horizontal axis still represents different test files, and the vertical axis represents the running time. The time is in log scale. According to the results, our model CN and CN + LOPT outperforms the control group BO. Especially the model CN, which is not optimized locally, will accomplish the task of finding hyper-parameters in just a few seconds. This demonstrates the efficeincy of our model. Because of this, it's possible for our model to optimize algorithms with ultra-high dimensional parameters.

\begin{figure*}
					\setlength{\abovecaptionskip}{0.cm}
					\setlength{\belowcaptionskip}{-0.cm}
			\begin{minipage}{0.32 \linewidth}
				\centering
				\includegraphics[width=0.95\linewidth]{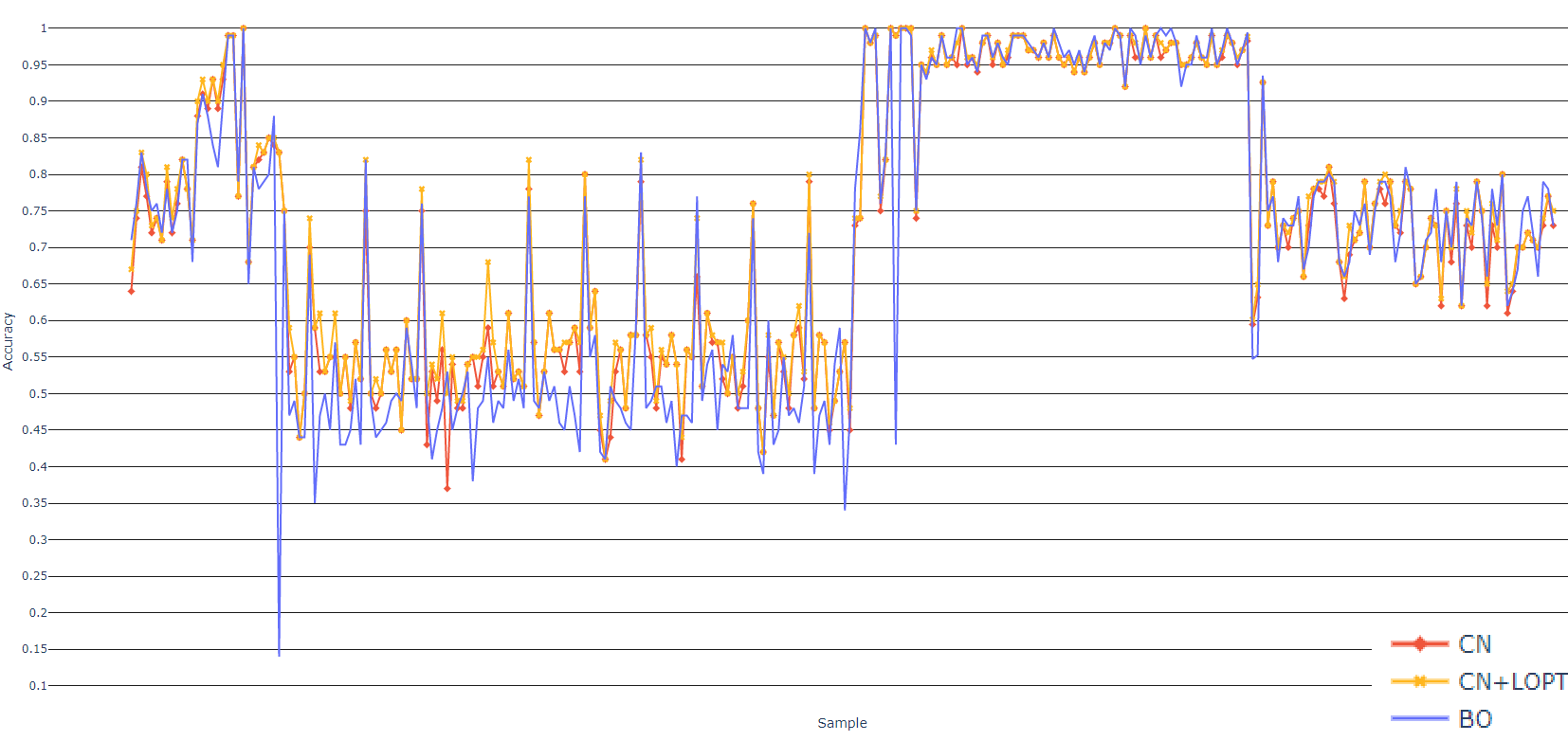}
				\caption{Accuracy for XGBOOST}
				\label{xg_acc}
			\end{minipage}
\hfill
			\begin{minipage}{0.32 \linewidth}
				\centering
				\includegraphics[width=0.95\linewidth]{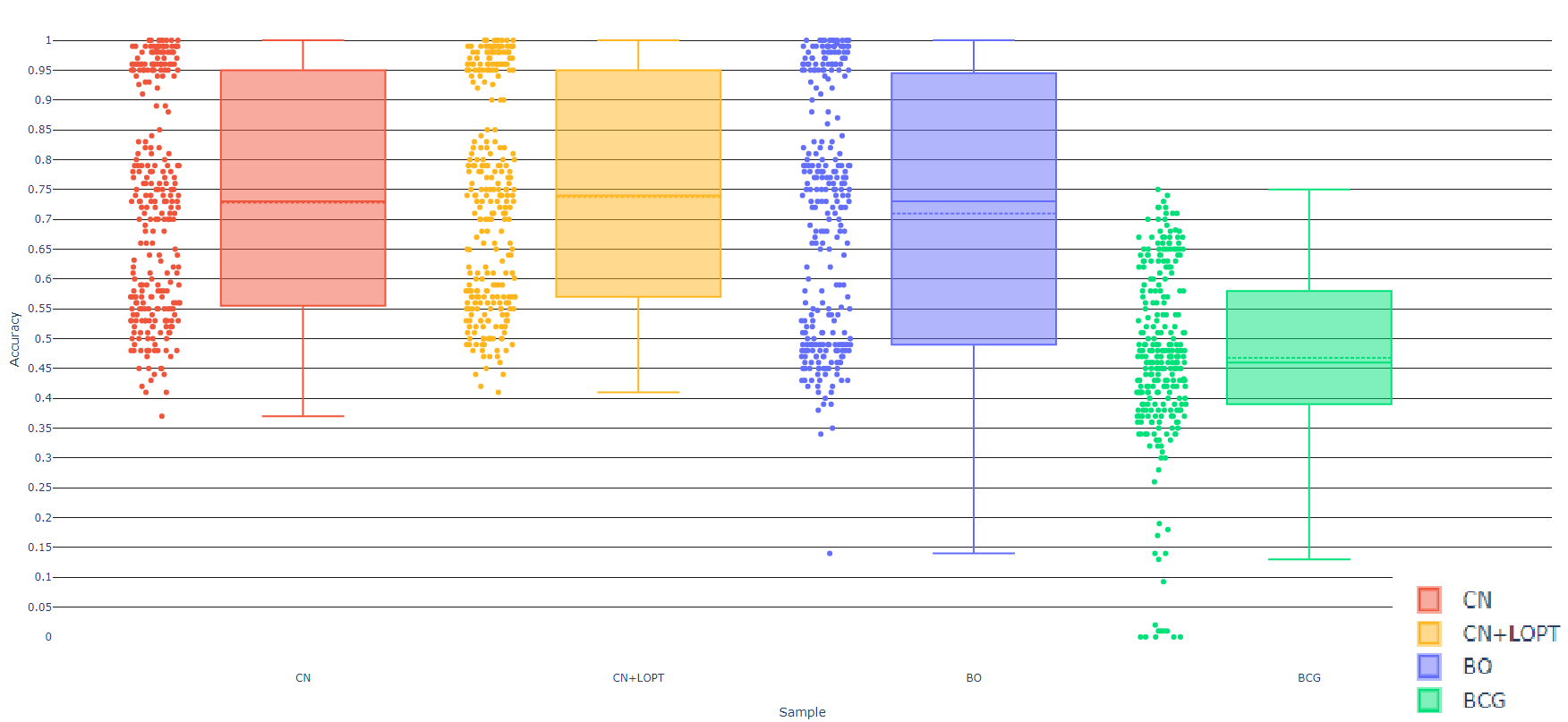}
				\caption{Statistics for XGBOOST}
				\label{xg_acc_box}
			\end{minipage}
\hfill
			\begin{minipage}{0.32 \linewidth}
				\centering
				\includegraphics[width=0.95\linewidth]{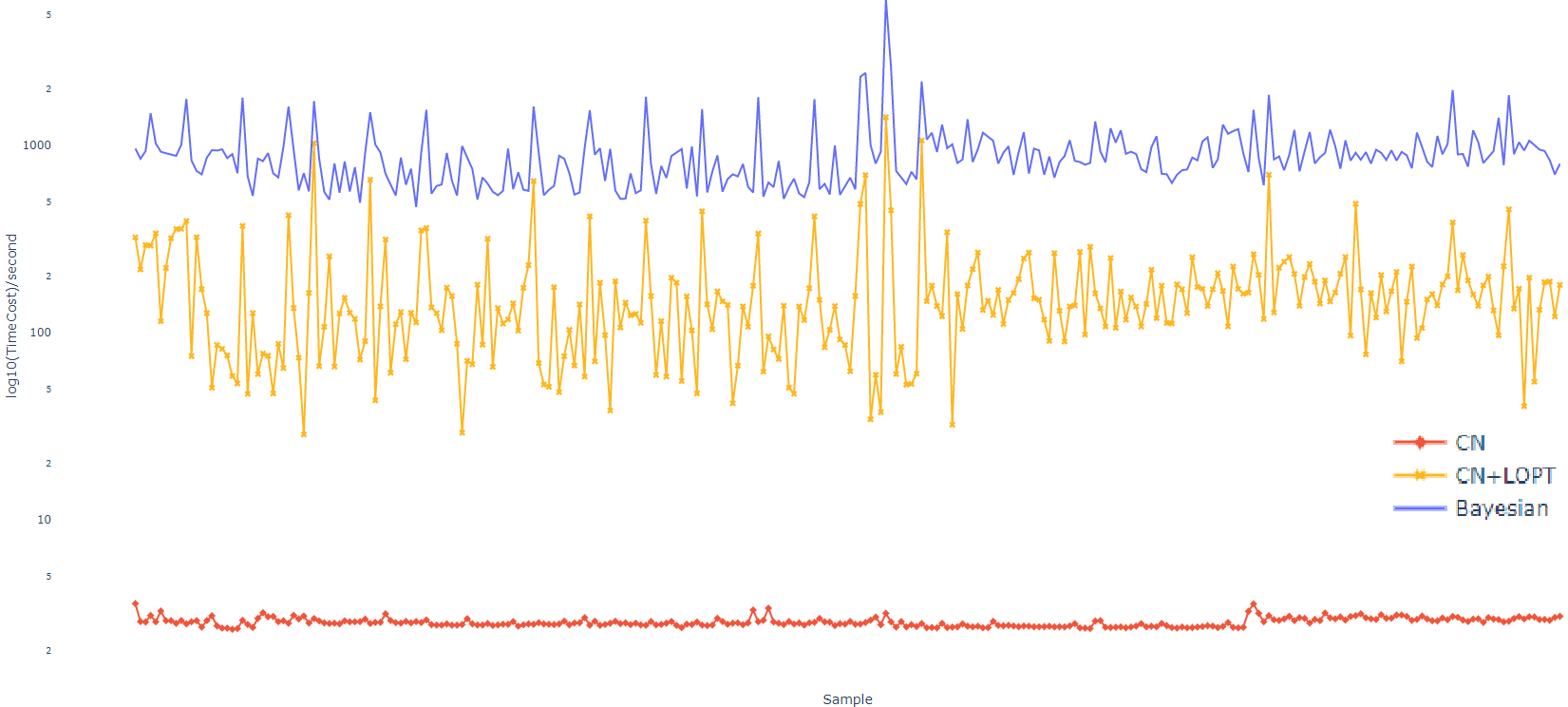}
				\caption{Time overhead comparison.(XGBOOST)}
				\label{xg_time}
			\end{minipage}
\end{figure*}
\begin{figure*}
					\setlength{\abovecaptionskip}{0.cm}
					\setlength{\belowcaptionskip}{-0.cm}
			\begin{minipage}{0.32 \linewidth}
				\centering
			\includegraphics[width=0.95\linewidth]{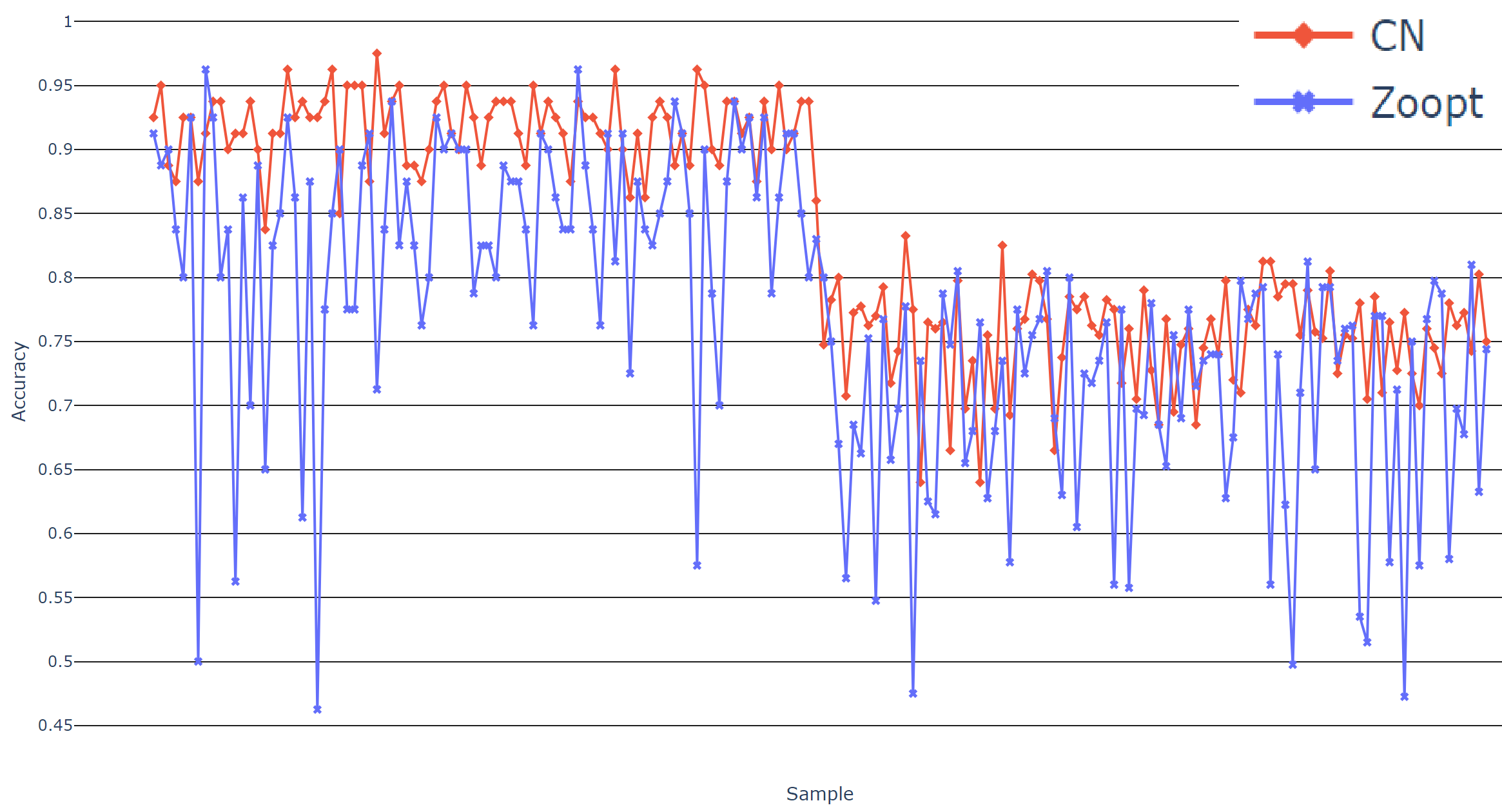}
				\caption{Accuracy for CNN}
				\label{cnn_acc}
			\end{minipage}
\hfill
			\begin{minipage}{0.32 \linewidth}
				\centering
				\includegraphics[width=0.95\linewidth]{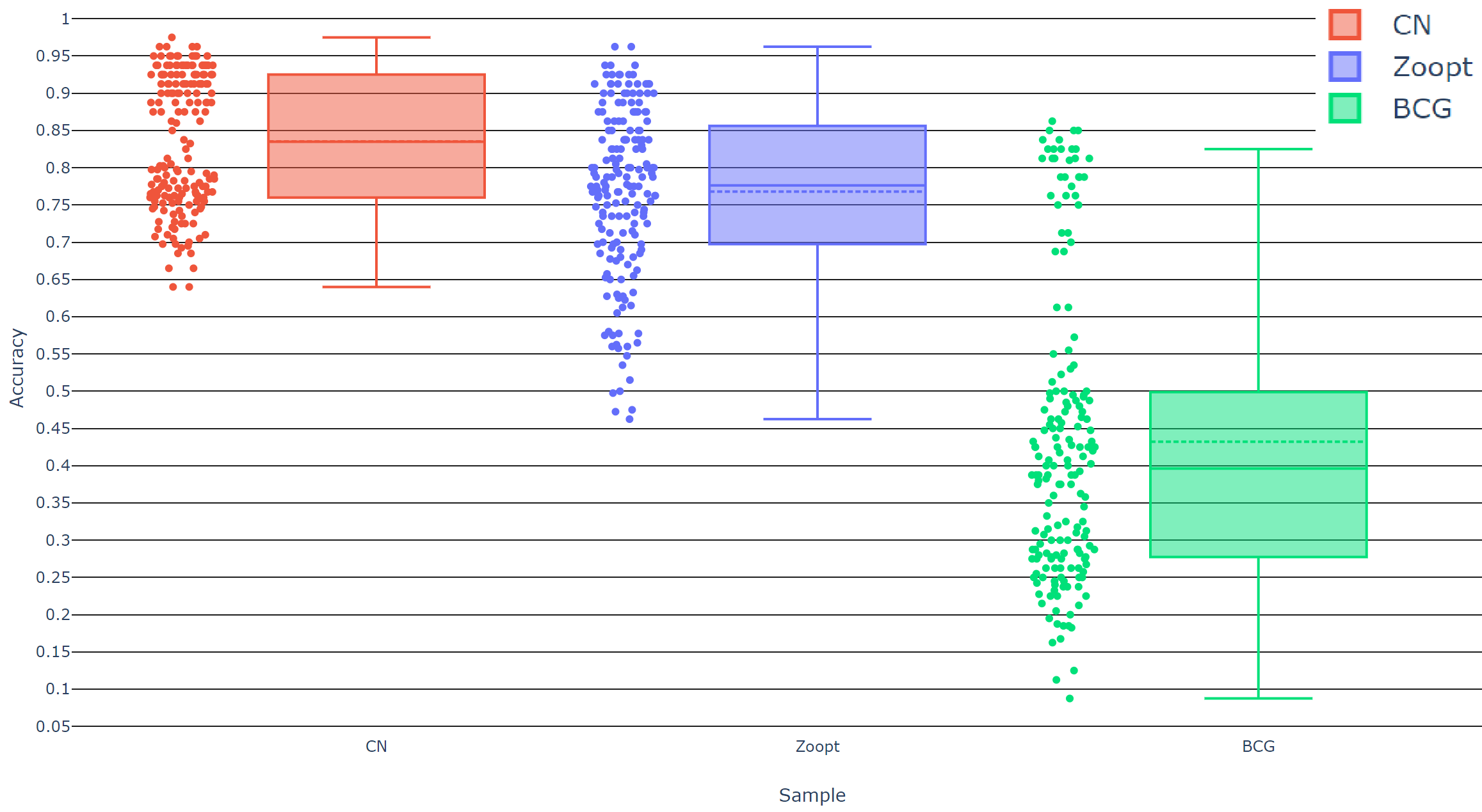}
				\caption{Statistics for CNN}
				\label{cnn_acc_box}
			\end{minipage}
\hfill
			\begin{minipage}{0.32 \linewidth}
				\centering
				\includegraphics[width=0.95\linewidth]{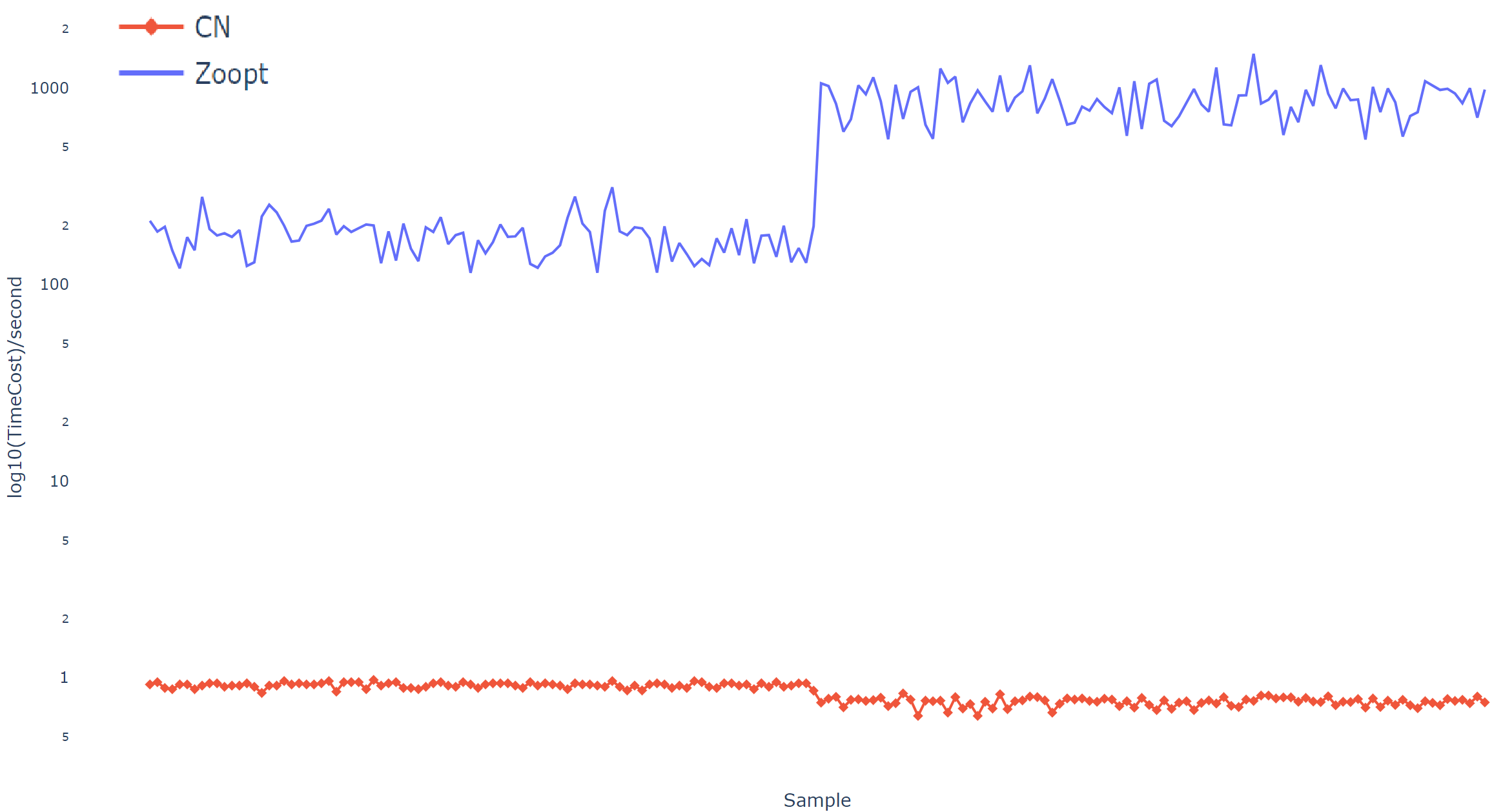}
				\caption{Run Time for CNN}
				\label{cnn_time}
			\end{minipage}
\end{figure*}
			\begin{figure}[t]
				\centering
				\includegraphics[scale = 1]{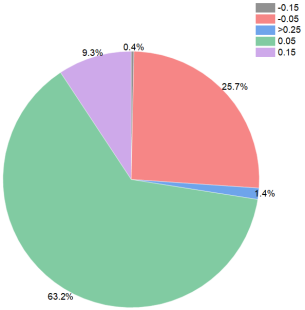}
				\caption{Accuracy increment of CN + LOPT}
				\label{xg_bing}
			\end{figure}

		\begin{table}[t]
\scriptsize
		\centering
		\begin{tabular}{l|r|r|r|r}
		\hline
		 & CN & CN+LOPT& BO & BCG \\
		\hline
		max	&1	&1	&1	&0.75	\\
		q3	&0.95		&0.95		&0.945	&0.58	\\
		median&0.73		&0.74		&0.73		&0.46	\\
		mean	&0.7279	&0.7371	&0.7097	&0.4678\\
		sd	&0.1833	&0.1791	&0.2061	&0.1510\\
		q1	&0.555	&0.57		&0.49		&0.39	\\
		min	&0.37		&0.41		&0.14		&0	\\
		\hline
		\end{tabular}
		\caption{Comparison of statistical characteristics.(XGBOOST)}
		\label{tabs:xg}
		\end{table}

			\begin{table}[t]
\scriptsize
			\centering
			\begin{tabular}{l|r|r|r}
			\hline
			 & CN & ZO& BCG \\
			\hline
			max	&0.975	&0.9625	&0.8625\\
			q3	&0.925	&0.8563	&0.4988\\
			median&0.835	&0.7763	&0.3963\\
			mean	&0.8353	&0.7680	&0.4322\\
			sd	&0.0889	&0.1136	&0.1978\\
			q1	&0.76		&0.6975	&0.2775\\
			min	&0.64		&0.4625	&0.0875\\
			\hline
			\end{tabular}
			\caption{Comparison of statistical characteristics.(CNN)}
			\label{tabs:cnn}
			\end{table}

 	\underline{CNN}
We test the effectiveness of our model (CN) on 180 classified datasets, 90 from mnist subsets and 90 from svhn subsets. The ZOOpt algorithm (ZO) was used as a control group, and a blank control group (BCG) without pre-training was also used to show the effectiveness of the proposed approach.
The optimal hyper-parameters generated by these three models are tested on the test set, and the accuracy is shown in Figure~\ref{cnn_acc}. The horizontal axis in the figure represents the test file, and the vertical axis represents the accuracy rate of CNN classification.\par
As with CNN, we analyze some statistics in Table~\ref{tabs:cnn}, and visualize them in Figure~\ref{cnn_acc_box}. First, according to the performance of BCG, we can observe that our dataset has a strong detection ability, which further illustrates that the CN model is effective in hyper-parameters prediction for CNN. Second, from the comparasion results of CN and ZO, our model outperforms ZO in various indicators, which shows that our model has strong generalization and migration capabilities.\par
In addition to the accuracy, we also compared the time overhead of CN and ZO, as shown in Figure~\ref{cnn_time}. The horizontal axis still represents different test files, and the vertical axis represents time overhead. The result is in log scale. From the  comparison results, our model CN outperforms the control group BO significantly. This coincides with the performance of CN on XGBOOST.

	\section{Related Work}
\label{sec:related}
The techniques of AutoML include model selection, automatic hyper-parameter optimization and automatic neural network structures construction. Here we focus on automatic hyper-parameter optimization.
Bayesian model is used to optimize hyper-parameters~\cite{FrazierA}. Ref.\cite{DBLP:journals/corr/abs-1805-04748} uses Bayesian algorithms to optimize the hyper-parameters of reinforcement learning. Even though they are effective, their efficiency prevent their applications on large datasets or the algorithm with many hyper-parameters. Although Some work solve the high-dimension problem~\cite{5c2348ceda562935fc1d576d,Ar}, the problem in efficiency is still not solved. Additionally, such approaches could be hardly transferred.\par
For the efficiency issue, some approaches have been proposed. 
An efficient automatic method~\cite{DBLP:journals/jise/LiHLLKT12} is proposed to optimize the parameters in the kernel of SVM by vectoring the kernel with sine/cosine algorithm~\cite{article}. However, this approach is too specific for SVM and fails to be generalized for other approaches.
%for Selecting the Parameter of  the Normalized Kernel Function to Support Vector Machines focuses on optimizing kernel functions. By analyzing the physical meaning of the kernel function, the tuning parameter model performs well in the hyper-parameter optimization of the high-dimensional kernel functions,  which solves the problems of large time overhead and an upper limit on the number of parameters. Parameter optimization of support vector regression based on sine cosine algorithm\cite{article} uses sine cosine algorithm to optimize the parameters of SVR, but this algorithm can only be used in SVR. They made improvements in efficiency by sacrificing some scalability. But this is also the shortcoming of these models, their model can only solve specific problems.

Bandit-based hyper-parameters optimization\cite{bbb} accelerates the random search through adaptive resource allocation. 
Ref.~\cite{feurer2015initializing} considers extracting the meta-knowledge of datasets to calculate the hyper-parameters. But they neglect the relations between all datasets running on the same algorithm and the optimal hyper-parameters corresponding to these datasets. Rer.\cite{inproceedings} determines search direction through analysis of meta-knowledge. These approaches are orthogonal to ours and could be combined to our approach.

\section{Conclusion $\And$ Future Work}
\label{sec:con}
In this paper, we study hyper-parameter optimization for machine learning algorithms. We model the mapping from dataset to corresponding optimal hyper-parameters with neural network, obtaining the optimal hyper-parameter according to such relationship trained from generated datasets and their corresponding optimal hyper-parameters. To achive high-quality model, we design sophisticated network structure with effective training methods. With such model, the hyper-parameters could be derived  according to the dataset directly. To optimize the hyper-parameters furthermore, we also devleop local search strategies. Extensive experiments on real datasets shows that our approaches achieve high efficiency and effectiveness. In future work, we will design CN structures to optimize more algorithms. At the same time, we will design matching NPEs to adapt to video, audio, and text data, establishing a CN-NPE knowledge base.
\newpage
\bibliographystyle{named}
\bibliography{ref3}
\end{document}